\documentclass[sn-mathphys]{sn-jnl}


\usepackage{graphicx}%
\usepackage{multirow}%
\usepackage{amsmath,amssymb,amsfonts}%
\usepackage{amsthm}%
\usepackage{mathrsfs}%
\usepackage[title]{appendix}%
\usepackage{xcolor}%
\usepackage{textcomp}%
\usepackage{manyfoot}%
\usepackage{booktabs}%
\usepackage{algorithm}%
\usepackage{algorithmicx}%
\usepackage{algpseudocode}%
\usepackage{listings}%
\usepackage{natbib}%
\usepackage[capitalise,nameinlink]{cleveref}
\usepackage{subfigure}
\usepackage{todonotes}
\usepackage[utf8]{inputenc}


\newcommand{\revision}[1]{{#1}}
\newcommand{\prior}{p_0}



\newcommand{\grad}{\nabla}

\newcommand{\entropy}{\mathcal{H}}
\newcommand{\barloss}{\bar{\ell}}

\newcommand{\loss}{\ell}

\newcommand{\vparam}{\vtheta}

\newcommand{\vnatparam}{\boldsymbol{\lambda}}

\newcommand{\vmeanparam}{\boldsymbol{\mu}}

\newcommand{\dkls}[3]{\mathbb{D}_{\text{KL}}^{#1}[#2 \, \|\, #3]}

\newcommand\cut[1]{}

\newcommand{\elbofinal}{\mathcal{L}}

\newcommand{\tvlambda}{\widetilde{\boldsymbol{\lambda}}}




\newcommand{\squishlist}{
   \begin{list}{$\bullet$}
    { \setlength{\itemsep}{0pt}      \setlength{\parsep}{3pt}
      \setlength{\topsep}{3pt}       \setlength{\partopsep}{0pt}
      \setlength{\leftmargin}{1.5em} \setlength{\labelwidth}{1em}
      \setlength{\labelsep}{0.5em} } }

\newcommand{\squishlisttwo}{
   \begin{list}{$\bullet$}
    { \setlength{\itemsep}{0pt}    \setlength{\parsep}{0pt}
      \setlength{\topsep}{0pt}     \setlength{\partopsep}{0pt}
      \setlength{\leftmargin}{2em} \setlength{\labelwidth}{1.5em}
      \setlength{\labelsep}{0.5em} } }

\newcommand{\squishend}{
    \end{list}  }









{}
\newtheorem{thm}{Theorem}{}
{}
{}

\newcommand{\half}{\mbox{$\frac{1}{2}$}}

\newcommand{\real}{\mbox{$\mathbb{R}$}}

\newcommand{\rnd}[1]{\left(#1\right)}
\newcommand{\sqr}[1]{\left[#1\right]}

\newcommand{\myang}[1]{\langle#1\rangle}
\newcommand{\myexpect}{\mathbb{E}}

\newcommand{\gauss}{\mbox{${\cal N}$}}







\newcommand{\myvec}[1]{\mbox{$\mathbf{#1}$}}
\newcommand{\myvecsym}[1]{\mbox{$\boldsymbol{#1}$}}

\newcommand{\vzero}{\mbox{$\myvecsym{0}$}}
\newcommand{\vone}{\mbox{$\myvecsym{1}$}}

\newcommand{\vepsilon}{\mbox{$\myvecsym{\epsilon}$}}

\newcommand{\vmu}{\mbox{$\myvecsym{\mu}$}}

\newcommand{\vlambda}{\mbox{$\myvecsym{\lambda}$}}

\newcommand{\vtheta}{\mbox{$\myvecsym{\theta}$}}

\newcommand{\vm}{\mbox{$\myvec{m}$}}

\newcommand{\vs}{\mbox{$\myvec{s}$}}

\newcommand{\vv}{\mbox{$\myvec{v}$}}

\newcommand{\vy}{\mbox{$\myvec{y}$}}

\newcommand{\vF}{\mbox{$\myvec{F}$}}

\newcommand{\vH}{\mbox{$\myvec{H}$}}
\newcommand{\vI}{\mbox{$\myvec{I}$}}

\newcommand{\vS}{\mbox{$\myvec{S}$}}
\newcommand{\vT}{\mbox{$\myvec{T}$}}

\newcommand{\vX}{\mbox{$\myvec{X}$}}






\newcommand{\diag}{\mbox{$\mbox{diag}$}}

\newcommand{\trace}{\mbox{Tr}}

\newcommand{\calD}{\mbox{${\cal D}$}}

\newcommand{\data}{\calD}





\newcommand{\be}{\begin{equation}}
\newcommand{\ee}{\end{equation}}
\newcommand{\bea}{\begin{eqnarray}}
\newcommand{\eea}{\end{eqnarray}}
\newcommand{\beaa}{\begin{eqnarray*}}
\newcommand{\eeaa}{\end{eqnarray*}}


\DeclareMathOperator*{\argmin}{arg\,min}

\crefname{section}{Sec.}{Sections}
\crefname{appendix}{App.}{Appendices}
\crefname{algorithm}{Alg.}{Algorithms}
\crefname{equation}{Eq.}{Eqs.}
\crefname{figure}{Fig.}{Figures}
\creflabelformat{equation}{#2\textup{#1}#3} 

\usepackage{thmtools}
\declaretheorem[
mdframed={
  skipabove=6pt,
  skipbelow=6pt,
  hidealllines=true,
  backgroundcolor={lightgray},
  innerleftmargin=8pt,
  innerrightmargin=8pt}
]{ex}




\theoremstyle{thmstyleone}%
%

\theoremstyle{thmstyletwo}%

\theoremstyle{thmstylethree}%

\raggedbottom

\begin{document}

\title[Information Geometry of Variational Bayes]{Information Geometry of Variational Bayes}


\author{\fnm{Mohammad Emtiyaz} \sur{Khan}}\email{emtiyaz.khan@riken.jp}

%
%
\affil{\orgdiv{RIKEN Center for AI Project}, \orgaddress{\street{1-4-1 Nihonbashi, Chuo-ku}, \city{Tokyo}, \postcode{174-0055}, \state{Tokyo}, \country{Japan}}}

%


\abstract{We highlight a fundamental connection between information geometry and variational Bayes (VB) and discuss its consequences for machine learning. Under \revision{certain} conditions, a VB solution always requires estimation or computation of natural gradients. We show several consequences of this fact by using the natural-gradient descent algorithm of \citet{khan2023bayesian} called the Bayesian Learning Rule (BLR). These include (i) \revision{a} simplification of Bayes' rule as addition
of natural gradients,
(ii) \revision{a generalization} of quadratic surrogates used in gradient-based methods, and (iii) \revision{a large-scale implementation} of VB algorithms \revision{for} large language models. Neither the connection nor \revision{its} consequences are new but \revision{we further emphasize the common origins of the two fields of information geometry and Bayes} with a hope to facilitate more work at the intersection of the two fields.}

\keywords{Bayesian Inference, Variational Inference, Natural Gradient Descent}



\maketitle

\section{Introduction}\label{sec1}

The information geometry of variational Bayes (VB) \citep{saul1996mean, bishop2006pattern} is a relatively less studied area compared to maximum-likelihood estimation \citep{amari1998natural}. Some of the early \revision{works on} natural gradients for VB were proposed as early as \revision{the} 90s \citep{amari1995information, sato1999, sato2001online} but they \revision{have} only \revision{been} popularized recently, for instance, by \citet{honkela2010approximate} and \citet{hoffman2013stochastic}, among others. 
These works clearly show that the two fields of VB and information geometry are inseparable.
Our goal in this paper is to highlight this fundamental connection and show its impact on machine learning. Both the connection and its consequences are based on existing works. \revision{The goal of this work is to further emphasize the common origins of the two fields of information geometry and Bayes.} \revision{Our} hope is that the connection will facilitate more work at the intersection of the two fields.

We start in \cref{sec:background} with some background on VB and then, in \cref{sec:igvb}, illustrate the special role of information geometry and natural gradients for VB. In \cref{sec:blr}, we present the Bayesian Learning Rule (BLR) \citep{khan2023bayesian} which exploits the power of natural gradients to optimize the VB objective. \revision{Subsequently}, we discuss three consequences due to the connection\revision{s} between information geometry
and VB.  
\begin{enumerate}
   \item In \cref{sec:add}, we show that Bayes' rule for conjugate models can be equivalently written as addition of natural gradients.
   \item In \cref{sec:beyondconj}, we show that \revision{the} natural gradients used in the BLR generalize the quadratic surrogates used in optimization methods.
   \item In \cref{sec:scaling}, we highlight the impact of \revision{the above results} on scaling VB to large deep networks (such as GPT-2) which was not possible until recently \citep{ShenDCNMBYGCKM24}. 
\end{enumerate}
Throughout, we emphasize the role played by information geometry in VB. Our hope is that these results \revision{will} facilitate future research at the intersection of the two fields of information geometry and Bayesian learning. 


\section{Variational Bayesian Learning}
\label{sec:background}

\revision{Variational Bayesian (VB) inference aims to} approximate the posterior distribution by using an optimization problem over the space of distributions. For example, consider a Bayesian model with likelihood $p(\data|\vparam)$ over the data $\data$ with parameter vector $\vparam\in\real^P$ and a prior $\prior(\vparam)$. For such models, \citet{zellner1988optimal} \revision{shows} that the posterior distribution $p(\vparam|\data)$ obtained by using Bayes' rule can also be recovered by solving an
optimization problem over the space $\mathcal{P}$ of all possible distributions over $\vparam$,
\begin{equation}
   p(\vparam|\data) = \frac{p(\data|\vparam) \prior(\vparam)}{p(\data)} = \argmin_{q \in \mathcal{P}}  \,\,
   \myexpect_{q} \sqr{ -\log p(\data|\vparam)}
    + \dkls{}{q}{\prior}.
    \label{eq:BayesP}
\end{equation}
The optimization in \revision{the last expression} is over distributions $q(\vparam)\in \mathcal{P}$, and the second term \revision{there} denotes the Kullback-Leibler (KL) divergence between $q(\vparam)$ and $\prior(\vparam)$.

VB restricts the optimization problem \revision{to approximate the posterior within} a subset $\mathcal{Q} \subset \mathcal{P}$ of candidate distributions, for example, $\mathcal{Q}$ can be the set of all Gaussian distributions. \revision{Below, we will write the VB formulation in a general form using a loss function,} which is suited for machine learning and deep learning problems. Essentially, we rearrange the objective by expanding the KL term and
\revision{clubbing} the likelihood and prior together as shown below:
\[
   \myexpect_{q} \sqr{ -\log p(\data|\vparam)} + \dkls{}{q}{\prior} = \myexpect_q\sqr{-\log \rnd{p(\data|\vparam)\prior(\vparam)} } - \entropy(q).
\]
The second term on the right is the entropy denoted by $\entropy(q)$. Then, by defining \revision{the loss as} $\barloss(\vparam) = - \log \rnd{ p(\data|\vparam) \prior(\vparam)}$, the optimization problem in \cref{eq:BayesP} can be rewritten as the following VB problem over the \revision{subset} of distributions $\mathcal{Q}$, 
\begin{equation}
   q_*(\vparam) = \argmin_{q \in \mathcal{Q}} \,\, \myexpect_{q} \sqr{  \barloss(\vparam) } - \entropy(q).
   \label{eq:vb}
\end{equation}
We refer to this as the VB learning problem (as opposed to VB inference).
The form applies more generally than VB inference, extending to cases when $\barloss$ does not correspond to a `proper' Bayesian model. Such cases are often covered under a larger Bayesian framework that target\revision{s} a \emph{generalized} posterior which is proportional to $\exp(-\barloss(\vparam))$ with an arbitrary loss \citep{10.1145/307400.307433, catoni2007pac, art670}. 


\section{Information Geometry of VB Solutions}
\label{sec:igvb}

The goal of this paper is to highlight a fundamental connection between variational Bayes and information geometry. We will now show that computation of any $q_*$ must involve computation of \emph{natural} gradients, which are gradients scaled by a Fisher matrix. We believe \revision{that} this result \revision{holds} for general posterior forms, but here we will focus on cases where $\mathcal{Q}$ is restricted to regular and minimal exponential-family
\revision{members} with respect to a fixed sufficient statistic $\vT(\vparam)$ as shown below:
\begin{equation}
   q_{\vnatparam}(\vparam) = h(\vparam) \exp\sqr{\myang{ \vnatparam,
            \vT(\vparam) } - A(\vnatparam)}.
    \label{eq:exp_fam}
\end{equation}
The candidates are parameterized through $\vnatparam \in \Omega$ which is called the natural parameter and \revision{takes values} in a non-empty open set $\Omega$. The cumulant function $A(\vnatparam)$ is finite, strictly convex and differentiable over $\Omega$. Further, $\myang{\cdot,\cdot}$ is an inner product and $h(\vparam)$ is a function of $\vparam$. The family is log-linear, that is, \revision{the} log of $q_{\vnatparam}(\vparam)$ is linear in $\vT(\vparam)$.

Natural gradients naturally arise for VB with exponential-family posterior distributions. This is because the derivative of the Shannon entropy $\entropy(q_{\vnatparam})$ with respect to natural parameters $\vnatparam$ necessarily involves the Fisher matrix of $q_{\vnatparam}$. This is derived below (by assuming $h(\vparam) = 1$ for simplicity),
\begin{equation}
   \begin{split}
      \grad_{\vnatparam} \entropy(q_{\vnatparam}) 
      &= -\grad_{\vnatparam} \myexpect_{q_{\vnatparam}}[ \log q_{\vnatparam}(\vparam) ] \\
      &= - \grad_{\vnatparam} \myexpect_{q_{\vnatparam}}[ \myang{ \vnatparam, \vT(\vparam) } -  A(\vnatparam) ] \\
      &= - \nabla_{\vnatparam} \sqr{ \myang{ \vnatparam, \nabla_{\vnatparam} A(\vnatparam) } - A(\vnatparam) }\\
      &= - [\nabla_{\vnatparam}^2 A(\vlambda)]\, \vlambda - \nabla_{\vnatparam} A(\vnatparam)  + \nabla_{\vnatparam} A(\vnatparam) \\
      &= - [\nabla_{\vnatparam}^2 A(\vlambda)]\, \vlambda.
   \end{split}
   \label{eq:grad_ent}
\end{equation}
The second equality is obtained by simply substituting the definition of $q_{\vnatparam}$. The third equality is due to a well known property of exponential-families that the \revision{expected value of sufficient statistic} is simply the derivative of the cumulant function, \revision{that is, $\myexpect_{q_{\vnatparam}}[\vT(\vparam)] = \nabla_{\vnatparam} A(\vnatparam)$}. \revision{The expectation $\myexpect_{q_{\vnatparam}}[\vT(\vparam)]$ is
also known as the expectation parameter, as we will
see in the next section where we will
denote it by $\vmu(\vlambda)$. The fourth line applies} the derivative, and the last line is obtained by cancelling the last two terms. For
exponential-families, the Fisher matrix is
$\vF(\vlambda) = \nabla_{\vnatparam}^2 A(\vnatparam)$, \revision{which shows} that the gradient of the entropy always contains this matrix. 

Using this result, we can obtain the optimality condition of VB by simply setting the derivative of the objective at the optimal $q_*$ to zero. Denoting the natural parameter of $q_*$ by $\vlambda_*$ and rewriting $q_*$ as $q_{\vnatparam_*}$, the optimality condition becomes $\nabla_{\vnatparam_*} \myexpect_{q_{\vnatparam_*}}[\barloss(\vparam)] = \nabla_{\vnatparam_*} \entropy(q_{\vnatparam_*})$. Then we \revision{substitute} \cref{eq:grad_ent} \revision{in} the right hand side to get
\begin{equation}
   \vnatparam_* = \vF(\vnatparam_*)^{-1} \grad_{\vnatparam_*} \myexpect_{q_*} [-\barloss(\vparam)]. 
   \label{eq:vb_opt}
\end{equation}
\revision{The quantity $\vF(\vlambda)^{-1} \nabla_{\vnatparam}$ on the right is a natural gradient due to the preconditioning by the inverse Fisher.} The equation shows that the optimal natural parameter is equal to the natural gradient of \revision{the} expected negative-loss at $\vlambda_*$. This implies that the computation of $\vlambda_*$ necessarily involves \revision{the} inversion of the Fisher matrix.


This result was first derived \revision{by} \citet{salimans2013fixed} \revision{who draw} an analogy to the solution of linear regression. For example, consider a linear model $\vy = \vX\vparam + \vepsilon$ with output vector $\vy$, \revision{feature} matrix $\vX$, and \revision{error vector} $\vepsilon$. Then, the optimal parameter is given by $\vparam_* = (\vX^\top\vX)^{-1} \vX^\top \vy$ which, similarly to \cref{eq:vb_opt}, involves an inversion.
\revision{Regardless} of the choice of algorithm, computing $\vparam_*$ necessarily requires us to solve a linear system. The convergence rate of an algorithm will therefore depend on the conditioning of the Gram matrix. A Newton step will immediately converge in one iteration, while gradient descent will take multiple steps, but both will solve the same linear system of equations. There is no escaping it. 

By analogy, any algorithm estimating $\vlambda_*$ through VB is forced to deal with \revision{the geometry of $q_{\vnatparam_*}$.} That is, the algorithm must solve \cref{eq:vb_opt} and \revision{is forced to compute or estimate the required} natural gradient. In this sense, any gradient based VB method \citep{graves2011practical,blundell2015weight} or even the black-box methods \citep{ranganath2013black} all aim to compute the same natural gradient. The conditioning of the Fisher
matrix \revision{thus} contributes to their convergence behavior.
Methods that ignore the geometry are expected to be slower
than those that exploit it to their advantage. 
In summary, natural gradients are an integral part of all VB solutions and all VB methods must deal with computation or estimation of natural gradients, either directly or indirectly.

\section{Information Geometry of the BLR}
\label{sec:blr}

\revision{The result in \cref{eq:vb_opt} highlights the information geometry of VB solutions but a similar result also holds for the \emph{iterates} obtained during the optimization of the VB objective. We will now derive the latter result by using a natural-gradient descent algorithm. Specifically, we will use the Bayesian Learning Rule (BLR) of \citet{khan2023bayesian} which takes a form similar to the Bayes' rule.}

To write the natural gradient descent in a computationally convenient form, we will exploit the `dual' coordinates of exponential families. \revision{This} form avoids an explicit inversion of the Fisher matrix\revision{, and can also be implemented easily. The dual coordinates are the expectation parameters defined as} $\vmeanparam(\vnatparam) = \myexpect_q[\vT(\vparam)] = \nabla A(\vnatparam)$ \revision{which we encountered while deriving \cref{eq:grad_ent}.} 
The mapping between the pair $(\vnatparam, \vmeanparam)$ is a bijection, \revision{which makes $\vmu$ a different} way to parameterize $q_{\vnatparam}$ \citep{banerjee2005clustering}.

The advantage of the dual coordinates is that they enable us to \revision{express} natural gradients with respect to $\vlambda$ as vanilla gradients with respect to $\vmu$, as shown below,
\begin{equation}
   \vF(\vnatparam)^{-1} \grad_{\vnatparam} = {\vF(\vnatparam)}^{-1}\sqr{ \grad_{\vnatparam} \vmeanparam(\vlambda)} \grad_{\vmeanparam} = \sqr{\grad_{\vnatparam}^2 A(\vnatparam) }^{-1}\sqr{ \grad_{\vnatparam}^2 A(\vlambda)} \grad_{\vmeanparam} = \nabla_{\vmeanparam} .
   \label{eq:equivalence_natgrad}
\end{equation}
The first equality follows simply from the chain rule to switch to the gradient with respect to $\vmu$, while the second equality is obtained by using the result that \revision{$\vF(\vlambda) = \grad_{\vnatparam} \vmu(\vlambda) = \nabla_{\vnatparam}^2 A(\vlambda)$}. The result shows that the natural gradient at $\vlambda$ is equivalent to vanilla gradient with respect to $\vmu$ at $\vmu(\vlambda) = \nabla_{\vnatparam} A(\vlambda)$.
\revision{Thus}, we can rewrite the optimality condition in \cref{eq:vb_opt} as follows:
\begin{equation}
   \vnatparam_* = \grad_{\vmeanparam_*} \myexpect_{q_{\vnatparam_*}} [-\barloss(\vparam)],  
   \label{eq:vb_opt_mu}
\end{equation}
where we denote $\vmu_* = \nabla_{\vnatparam} A(\vlambda_*)$. This provides yet another way to understand the information geometry of VB solutions: the natural parameter is simply equal to the gradients computed in the dual $\vmu$-space.  

The Bayesian Learning Rule uses the following iterations \revision{with a learning rate $\rho_t>0$ at each iteration $t$ to find the solution},
\begin{equation}
   \vnatparam_{t+1} \leftarrow (1-\rho_t)\vnatparam_t + \rho_t \grad_{\vmeanparam_t} \myexpect_{q_{\vnatparam_t}} [-\barloss(\vparam)] ,
    \label{eq:blr}
\end{equation}
where $\vmu_t = \nabla A(\vlambda_t)$.
It is easy to check that, when these steps converge, the optimal $\vlambda_*$ satisfies \cref{eq:vb_opt_mu} (assuming that $\vnatparam_{t}$ are `valid' natural parameters for all $t$). The algorithm performs natural-gradient descent in the $\vnatparam$-space due to the property shown in \cref{eq:equivalence_natgrad} and it is also a mirror-descent algorithm in the $\vmu$-space as originally proposed by \cite{khan2017conjugate} in the context of variational inference. 

\section{Bayes' Rule as Addition of Natural Gradients}
\label{sec:add}

The BLR uses a convenient `additive' update where natural-gradients are added to the \revision{previous} natural parameters, but the same update can be written in a multiplicative form that resembles Bayes' rule. This reformulation enables us to then connect additive updates used in machine learning to multiplicative updates used in Bayes' rule. 

To rewrite the BLR in a multiplicative form, we simply need to \revision{substitute} $\vlambda_{t+1}$ from \cref{eq:blr} into \cref{eq:exp_fam} and simplify to get
\begin{equation}
    q_{t+1}(\vparam) \propto q_t(\vparam)^{(1- \rho_t)}
    \sqr{ \exp\rnd{ { \myang{ \tvlambda_t \, , \, \text{\vT}(\text{\vparam})}}} }^{\rho_t}
    \text{ where } \tvlambda_t = \grad_{\vmeanparam_t} \myexpect_{q_t} [-\barloss(\vparam)].
    \label{eq:blr_dist}
\end{equation}
\revision{Here, to simplify the notation, we denote $q_{\vnatparam_t}$ by $q_t$.}
\revision{This} form is similar to a Bayes' filter where the previous $q_t$ is used as the prior and multiplied \revision{with} a likelihood \revision{that has an exponential-family} form \revision{using a} natural parameter $\tvlambda_t$. Essentially, every step of the BLR converts the loss $\barloss$ into a likelihood with an exponential\revision{-family} form. The step can therefore be seen as Bayesian inference on a `conjugate' exponential-family model. This interpretation was originally proposed by \citet{khan2017conjugate} and we will use it to argue that natural gradients are
also \revision{inherently present in} Bayes' rule.

We first illustrate the \revision{main} idea on a simple ridge-regression example where the computation of the posterior can be written as addition of natural parameters. Afterward, we will show that this is equivalent to addition of natural gradients and provide a formal statement of the result.
%
%
\begin{ex}
   \label{ex:ridge}
   Consider ridge regression for $\data = (\vy,
   \vX)$ with output vector $\vy\in\real^N$ and feature matrix $\vX\in\real^{N\times P}$. The likelihood and prior are given as follows:
   \[
      p(\vy|\vparam) = \gauss(\vy|\vX\vparam, \vI_N)
      \qquad
      \prior(\vparam) = \gauss(\vparam|\vzero, \vI_P).
   \]
   where $\vI_N$ and $\vI_P$ are identity matrices of size $N$ and $P$ respectively.
   We can write both densities in a quadratic form with respect to $\vparam$ as shown below,
   \begin{equation}
      \begin{split}
         p(\vy|\vparam) &\propto \exp \sqr{ \vy^\top\vX \vparam + \trace\rnd{( -\half \vX^\top\vX) \vparam\vparam^\top}} \\
         \prior(\vparam) &\propto \exp \sqr{ \vzero^\top \vparam + \trace\rnd{( -\half\vI_P) \vparam\vparam^\top}}. \\
      \end{split}
   \end{equation}
   Multiplying the two densities, we can write their product in the same form too,
   \begin{equation}
      p(\vparam|\vy) \propto p(\vy|\vparam) \prior(\vparam) 
      \propto \exp \sqr{ \rnd{ \vX^\top \vy + \vzero}^\top \vparam + \trace\rnd{- \half( \vX^\top\vX + \vI_P) \vparam\vparam^\top}}. 
      \label{eq:post1}
   \end{equation}
   Because the product is quadratic in $\vparam$, we can conclude that the posterior also takes a Gaussian form. This property is often referred to as `conjugacy' in the Bayesian literature, and we can equivalently write any such conjugate operation as addition of natural parameters.

   To show this, let us denote the Gaussian posterior by $\gauss(\vparam| \vm_*, \vS_*^{-1})$ with mean $\vm_*$ and precision $\vS_*$, which can also be written in an exponential-family form,
   \begin{equation}
      \gauss(\vparam|\vm_*, \vS_*^{-1}) \propto \exp \sqr{ \vm_*^\top\vS_* \vparam + \trace\rnd{ - \half\vS_*\vparam\vparam^\top}}.
      \label{eq:post2}
   \end{equation}
   Now, by simply equating the terms in front of $\vparam$ and $\vparam\vparam^\top$ in \cref{eq:post1} and \cref{eq:post2} respectively, we can write the multiplication as an addition in the $\vlambda$-space,
   \begin{equation}
      \underbrace{ \rnd{ \begin{array}{c} \vS_*\vm_*\\ -\half \vS_* \end{array} } }_{\vnatparam_*}
         = \underbrace{ \rnd{ \begin{array}{c} \vX^\top \vy \\ -\half \vX^\top \vX \end{array} } }_{\tvlambda_{\text{lik}}}
            + \underbrace{ \rnd{ \begin{array}{c} \vzero\\ -\half \vI_P \end{array} } }_{\tvlambda_{\text{prior}}} 
               \quad\implies \quad
               \vnatparam_* = \tvlambda_{\text{lik}} + \tvlambda_{\text{prior}}
   \end{equation}
   The natural parameter $\vnatparam_*$ \revision{is thus} written as a sum of $\tvlambda_{\text{lik}}$ and $\tvlambda_{\text{prior}}$.
\end{ex}
\vspace{.3cm}
In general, for conjugate models, the likelihood, prior, and posterior all take the same exponential-\revision{family} form with respect to some sufficient statistic $\vT(\vparam)$. That is, for some \smash{$\tvlambda_{\text{lik}}, \tvlambda_{\text{prior}}$}, and $\vnatparam_*$, we can write them as follows:
\begin{equation}
   \begin{split}
      p(\vy|\vparam) &\propto \exp\sqr{\myang{ \tvlambda_{\text{lik}},
               \text{\vT}(\text{\vparam}) }}, \\
      p(\vparam) &\propto \exp\sqr{\myang{ \tvlambda_{\text{prior}},
               \text{\vT}(\text{\vparam}) }}, \\
      p(\vparam|\vy) &\propto \exp\sqr{\myang{ \vnatparam_*,
               \text{\vT}(\text{\vparam}) }}, 
   \end{split}
\end{equation}
and Bayes' rule can be expressed as a simple addition 
\begin{equation}
   p(\vparam|\vy) \propto p(\vy|\vparam) \prior(\vparam)
   \quad \iff \quad
   \vnatparam_* = \tvlambda_{\text{lik}} + \tvlambda_{\text{prior}}.
   \label{eq:bayes_rule_add}
\end{equation}
%
The quantities being added above can also be seen as natural gradients. \revision{Below, we show that} the natural gradients of the expected log-likelihood and expected log-prior are $\tvlambda_{\text{lik}}$ and $\tvlambda_{\text{prior}}$ respectively:
\begin{equation}
   \begin{split}
      \grad_{\vmeanparam} \myexpect_{q_{\vnatparam}} [\log p(\vparam|\vy)]  
      &= \grad_{\vmeanparam} \myexpect_{q_{\vnatparam}} [{\myang{ \tvlambda_{\text{lik}}, \text{\vT}(\text{\vparam}) }}] 
      = \grad_{\vmeanparam} [{\myang{ \tvlambda_{\text{lik}}, \vmu }}] 
      = \tvlambda_{\text{lik}} \\
      \grad_{\vmeanparam} \myexpect_{q_{\vnatparam}} [\log \prior(\vparam)]  
      &= \grad_{\vmeanparam} \myexpect_{q_{\vnatparam}} [{\myang{ \tvlambda_{\text{prior}}, \text{\vT}(\text{\vparam}) }}] 
      = \grad_{\vmeanparam} [{\myang{ \tvlambda_{\text{prior}}, \vmu }}] 
      = \tvlambda_{\text{prior}} 
   \end{split}
\end{equation}
In general, whenever $\barloss(\vparam)$ is linear in $\vT(\vparam)$, then \revision{the} natural gradient \revision{of $\myexpect_{q_{\vnatparam}}[\loss(\vparam)]$} is equal to the factor that is in front of $\vT(\vparam)$. The following theorem formalizes this.
\vspace{.3cm}
\begin{thm}
   For any loss of \revision{the} form $\barloss(\text{\vparam}) \propto \myang{ -\tvlambda,
            \text{\vT}(\text{\vparam}) } $, we have $\grad_{\vmeanparam} \myexpect_{q_{\vnatparam}} [\loss(\vparam)] = -\tvlambda$.
\end{thm}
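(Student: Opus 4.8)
The plan is to turn the expected loss into an explicit linear function of the mean parameter $\vmeanparam$ and simply read off its gradient. The only nontrivial ingredient is the exponential-family identity already used in deriving \cref{eq:grad_ent}, namely $\myexpect_{q_{\vnatparam}}[\vT(\vparam)] = \vmeanparam$; everything after that is linearity. This mirrors exactly the computations displayed just above for the likelihood and prior factors, so the theorem is really just the general statement of that pattern.

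First I would read the hypothesis, interpreting the proportionality as equality up to an additive term independent of $\vparam$ (consistent with the density-level $\propto$ used for the likelihood and prior above), as $\barloss(\vparam) = \myang{-\tvlambda, \vT(\vparam)} + c$ with $c$ constant in $\vparam$. Taking the expectation under $q_{\vnatparam}$ and pulling it through the inner product by linearity,
\begin{equation}
   \myexpect_{q_{\vnatparam}}[\barloss(\vparam)] = \myang{-\tvlambda, \myexpect_{q_{\vnatparam}}[\vT(\vparam)]} + c = \myang{-\tvlambda, \vmeanparam} + c,
\end{equation}
which is an affine function of $\vmeanparam$. Differentiating this affine map with respect to $\vmeanparam$ returns its linear coefficient, so $\grad_{\vmeanparam} \myexpect_{q_{\vnatparam}}[\barloss(\vparam)] = -\tvlambda$, as claimed.

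The algebra is routine; the two points I would be careful to state are the status of $\propto$ and the legitimacy of differentiating in the $\vmeanparam$-coordinates. For the former, any $\vparam$-independent term $c$ drops out under $\grad_{\vmeanparam}$, which is precisely why the conclusion recovers $-\tvlambda$ with no trailing constant. For the latter, $\vmeanparam$ is a valid coordinate system for $q_{\vnatparam}$ because the map $\vnatparam \mapsto \vmeanparam = \grad_{\vnatparam} A(\vnatparam)$ is a bijection for a regular minimal exponential family, and $\grad_{\vmeanparam}$ coincides with the natural gradient $\vF(\vnatparam)^{-1}\grad_{\vnatparam}$ via \cref{eq:equivalence_natgrad}; I would note this so that the left-hand side is unambiguously the natural gradient promised in the surrounding discussion. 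There is no genuine obstacle beyond this bookkeeping.
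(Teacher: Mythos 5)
Your proof is correct and follows essentially the same route as the paper: the displayed computations for $\tvlambda_{\text{lik}}$ and $\tvlambda_{\text{prior}}$ just above the theorem are exactly this argument --- replace $\myexpect_{q_{\vnatparam}}[\vT(\vparam)]$ by $\vmeanparam$ and read off the linear coefficient. Your added remarks on the meaning of $\propto$ (an additive $\vparam$-independent constant that vanishes under $\grad_{\vmeanparam}$) and on the validity of the $\vmeanparam$-coordinates are sensible bookkeeping that the paper leaves implicit.
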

\vspace{.3cm}
Due to this property, Bayes' rule can be \revision{seen} as a special case of the BLR with learning rate $\rho_t = 1$. The following theorem further shows that \revision{the BLR converges in one step, regardless of where it is initialized.}
\vspace{.3cm}
\begin{thm} 
   Bayes' rule in conjugate models is realized by one step of the BLR update in \cref{eq:blr} with learning rate $\rho_t = 1$.
\end{thm}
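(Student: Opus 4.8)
The plan is to specialize the BLR update in \cref{eq:blr} to $\rho_t = 1$ and verify that the single resulting iterate coincides with the conjugate posterior $\vnatparam_*$ of \cref{eq:bayes_rule_add}, independently of the starting point $\vnatparam_t$. Setting $\rho_t = 1$ annihilates the $(1-\rho_t)\vnatparam_t$ term, leaving
\begin{equation*}
   \vnatparam_{t+1} = \grad_{\vmeanparam_t} \myexpect_{q_{\vnatparam_t}}[-\barloss(\vparam)],
\end{equation*}
so the entire argument reduces to evaluating this natural gradient for a conjugate loss.

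First I would expand the negative loss as $-\barloss(\vparam) = \log p(\data|\vparam) + \log\prior(\vparam)$ and invoke conjugacy: each log-density is linear in the sufficient statistic, so $\log p(\data|\vparam) = \myang{\tvlambda_{\text{lik}}, \vT(\vparam)} + c_1$ and $\log\prior(\vparam) = \myang{\tvlambda_{\text{prior}}, \vT(\vparam)} + c_2$ with $c_1, c_2$ independent of $\vparam$. Hence $\barloss(\vparam) = \myang{-(\tvlambda_{\text{lik}}+\tvlambda_{\text{prior}}), \vT(\vparam)} - (c_1+c_2)$, which has exactly the linear-in-$\vT$ form required by Theorem~1, with $\tvlambda = \tvlambda_{\text{lik}}+\tvlambda_{\text{prior}}$. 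Applying that theorem gives $\grad_{\vmeanparam_t}\myexpect_{q_{\vnatparam_t}}[\barloss(\vparam)] = -(\tvlambda_{\text{lik}}+\tvlambda_{\text{prior}})$, so the BLR step, which uses $-\barloss$, produces $\grad_{\vmeanparam_t}\myexpect_{q_{\vnatparam_t}}[-\barloss(\vparam)] = \tvlambda_{\text{lik}}+\tvlambda_{\text{prior}}$. By Bayes' rule \cref{eq:bayes_rule_add} this sum equals $\vnatparam_*$, and substituting into the specialized update yields $\vnatparam_{t+1} = \vnatparam_*$, i.e.\ one step reproduces the conjugate posterior exactly.

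The key subtlety — and the step I would treat most carefully — is that the natural gradient above is evaluated at the current point $\vmeanparam_t$, yet the conclusion must hold for \emph{every} initialization $\vnatparam_t$. The resolution is precisely the linearity of the loss in $\vT(\vparam)$: because $\myexpect_{q_{\vnatparam}}[\vT(\vparam)] = \vmeanparam$, the expected negative loss is an \emph{affine} function of $\vmeanparam$, so its gradient $\grad_{\vmeanparam}$ is the constant vector $\tvlambda_{\text{lik}}+\tvlambda_{\text{prior}}$ at every $\vmeanparam$. This constancy is what makes the result independent of the starting point and delivers convergence in a single step. As a minor bookkeeping point I would note that the $\vparam$-independent constants $c_1, c_2$ drop out under $\grad_{\vmeanparam}$, since the expectation of a $\vparam$-constant does not vary with $\vmeanparam$. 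No genuine obstacle remains; the entire content is carried by Theorem~1 and the conjugate decomposition in \cref{eq:bayes_rule_add}.
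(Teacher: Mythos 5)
Your proof is correct and follows essentially the same route as the paper: set $\rho_t=1$, use conjugacy to write $\barloss$ as linear in $\vT(\vparam)$, and apply Theorem~1 to identify the natural gradient with $\tvlambda_{\text{lik}}+\tvlambda_{\text{prior}}=\vnatparam_*$. Your added remark that the expected loss is affine in $\vmeanparam$, so the gradient is constant and the step is initialization-independent, is a nice explicit articulation of what the paper leaves implicit.
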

\vspace{.3cm}
\revision{The one-step} convergence is a direct consequence of the fact that the posterior is available in closed form. 
In general, the BLR in \cref{eq:blr_dist} extends the closed-form multiplicative property of Bayes' rule to \revision{general cases}. This is made possible by exploiting the information geometry of VB \revision{through} natural gradients. At each iteration, natural gradients are used to write an update that resembles Bayes' rule which is also equivalent to a simple addition:
\begin{equation}
   q_{t+1}(\vparam) \propto q_t(\vparam)^{(1- \rho_t)}
    \sqr{ \exp\rnd{ { \myang{ \tvlambda_t \, , \, \text{\vT}(\text{\vparam})}}} }^{\rho_t}
   \quad \iff \quad
   \vnatparam_{t+1} = (1-\rho_t) \vnatparam_t + \rho_t \tvlambda_t .
\end{equation}
This is an extension of \cref{eq:bayes_rule_add} to \revision{problems involving non-conjguate terms $\barloss$ and where no closed-form expression for the posterior exists.} 
The optimization \revision{will} not terminate in one step, but the form still resembles Bayes' rule.

\section{Beyond Quadratic Surrogates with BLR}
\label{sec:beyondconj}

The \revision{additive} property of the BLR enables us to derive machine learning algorithms as special cases of natural gradient descent on VB, as shown by \citet{khan2023bayesian}. We show now that this is made possible because the BLR yields a surrogate \revision{which is similar to} the quadratic surrogates used in standard continuous optimization methods. Information geometry plays an important role here \revision{because the
BLR surrogates are obtained by replacing vanilla gradients 
of the loss by natural gradients.}

We start by discussing the standard quadratic-surrogate model used in popular optimization algorithms. For instance, the Newton step can be written as minimization of a local quadratic surrogate defined at $\vparam_t$,  
\begin{align}
   \vparam_{t+1} &= \vparam_t - \vH(\vparam_t)^{-1} \nabla \barloss(\vparam_t) = \argmin_{\text{\vparam}} \,\, \vparam^\top \grad \barloss(\vparam_t) + \frac{1}{2} (\vparam-\vparam_t)^\top \vH(\vparam_t) (\vparam-\vparam_t) . \label{eq:grad_desc}
\end{align}
\revision{Here, we denote by $\vH(\vparam_t)$ the Hessian of $\barloss$ at $\vparam_t$.}

The BLR has a similar interpretation where, at each step, a surrogate linear in $\vT(\vparam)$ is minimized at $q_t$ but the optimization takes place in the $\vmu$-space. 
This is obtained by using the `mirror descent' form of the BLR originally proposed by \citet{khan2017conjugate}. The following theorem states the result.
\vspace{.3cm}
\begin{thm}
   The BLR performs mirror descent over the VB objective $\mathcal{L}(q) = \myexpect_q[\barloss(\vparam)] - \entropy(q)$, that is, the steps can be written as follows:
\begin{equation}
   \begin{split}
      \vnatparam_{t+1} = (1-\rho_t) \vnatparam_t + \rho_t \tvlambda_t \quad &=\argmin_{\vmeanparam} \,\, \myang{\vmu, \nabla_{\vmeanparam_t} \elbofinal } + \frac{1}{\rho_t} \dkls{}{q}{q_t}. 
   \end{split}
   \label{eq:blr_surrogate}
\end{equation}
\end{thm}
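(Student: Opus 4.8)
The plan is to solve the convex subproblem on the right-hand side of \cref{eq:blr_surrogate} in closed form and verify that the natural parameter of its minimizer coincides with the additive BLR update on the left. The essential ingredient is that, for an exponential family, the divergence $\dkls{}{q}{q_t}$ is a Bregman divergence in the mean-parameter coordinate, which makes the stationarity condition of the subproblem linear in $\vnatparam$ and hence solvable explicitly.

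First I would write the KL divergence between two members $q = q_{\vnatparam}$ and $q_t = q_{\vnatparam_t}$ directly. Substituting the log-linear form of \cref{eq:exp_fam} and using $\myexpect_{q_{\vnatparam}}[\vT(\vparam)] = \vmu$ gives $\dkls{}{q}{q_t} = \myang{\vnatparam - \vnatparam_t, \vmu} - A(\vnatparam) + A(\vnatparam_t)$. Introducing the convex conjugate $A^*$ of the cumulant $A$, this is exactly the Bregman divergence $B_{A^*}(\vmu \,\|\, \vmu_t) = A^*(\vmu) - A^*(\vmu_t) - \myang{\vnatparam_t, \vmu - \vmu_t}$ generated by $A^*$, where I use the duality facts $A^*(\vmu) = \myang{\vnatparam, \vmu} - A(\vnatparam)$ and $\nabla A^*(\vmu) = \vnatparam$, the latter being the inverse of the bijection $\vmu = \nabla A(\vnatparam)$ noted earlier.

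Next I would exploit strict convexity. Because $A^*$ is strictly convex on the image of $\Omega$, the objective $\myang{\vmu, \nabla_{\vmeanparam_t}\elbofinal} + \tfrac{1}{\rho_t} B_{A^*}(\vmu\,\|\,\vmu_t)$ is strictly convex in $\vmu$, so its unique minimizer is pinned down by the first-order condition. Differentiating and using $\nabla_{\vmu} B_{A^*}(\vmu\,\|\,\vmu_t) = \nabla A^*(\vmu) - \vnatparam_t = \vnatparam - \vnatparam_t$, the condition reads $\nabla_{\vmeanparam_t}\elbofinal + \tfrac{1}{\rho_t}(\vnatparam - \vnatparam_t) = 0$, which rearranges to $\vnatparam_{t+1} = \vnatparam_t - \rho_t\, \nabla_{\vmeanparam_t}\elbofinal$, read off in the natural coordinate of the minimizer through the bijection.

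It then remains to evaluate $\nabla_{\vmeanparam_t}\elbofinal$ for $\elbofinal = \myexpect_q[\barloss(\vparam)] - \entropy(q)$. The loss term is immediate from the definition $\tvlambda_t = \grad_{\vmeanparam_t}\myexpect_{q_t}[-\barloss(\vparam)]$, which gives $\grad_{\vmeanparam_t}\myexpect_{q_t}[\barloss(\vparam)] = -\tvlambda_t$. For the entropy I would combine \cref{eq:grad_ent}, i.e.\ $\grad_{\vnatparam}\entropy = -\vF(\vnatparam)\vnatparam$, with the gradient-equivalence $\nabla_{\vmeanparam} = \vF(\vnatparam)^{-1}\grad_{\vnatparam}$ from \cref{eq:equivalence_natgrad} to obtain $\nabla_{\vmeanparam}\entropy = \vF(\vnatparam)^{-1}(-\vF(\vnatparam)\vnatparam) = -\vnatparam$. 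Hence $\nabla_{\vmeanparam_t}\elbofinal = -\tvlambda_t + \vnatparam_t$, and substituting into the stationarity expression gives $\vnatparam_{t+1} = \vnatparam_t - \rho_t(\vnatparam_t - \tvlambda_t) = (1-\rho_t)\vnatparam_t + \rho_t\tvlambda_t$, which is precisely the BLR update in \cref{eq:blr}. The hard part will be the first step: correctly identifying $\dkls{}{q}{q_t}$ as the Bregman divergence $B_{A^*}$ and tracking the duality relation $\nabla A^* = (\nabla A)^{-1}$, since this is what linearizes the proximal term and renders the stationarity condition solvable in closed form; the entropy-gradient conversion via \cref{eq:equivalence_natgrad} is the other place that demands care.
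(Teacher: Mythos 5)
Your proposal is correct and follows essentially the same route as the paper's proof: differentiate the subproblem with respect to $\vmu$, use $\nabla_{\vmeanparam_t}\elbofinal = -\tvlambda_t + \vnatparam_t$ (from \cref{eq:grad_ent} and \cref{eq:equivalence_natgrad}) together with $\nabla_{\vmeanparam}\dkls{}{q}{q_t} = \vnatparam - \vnatparam_t$, and set the result to zero. The only difference is that you explicitly justify the KL-gradient identity via the Bregman-divergence representation $\dkls{}{q}{q_t} = B_{A^*}(\vmu\,\|\,\vmu_t)$, a step the paper asserts without proof, which is a welcome addition of rigor rather than a change of method.
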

\begin{proof}
   \emph{
      The result can be proved straightforwardly by taking the gradient with respect to $\vmu$ and setting it to 0. To do so, we first write $\grad_{\vmeanparam} \elbofinal$ by using \cref{eq:grad_ent},
\begin{equation}
   \nabla_{\vmeanparam_t} \mathcal{L} = \nabla_{\vmeanparam_t} \myexpect_{q_t}[\barloss(\vparam)] - \nabla_{\vmeanparam_t} \entropy(q_t) = - \tvlambda_t + \vlambda_t .
   \label{eq:natgrad_elbo}
\end{equation}
\revision{We substitute this in the second expression in \cref{eq:blr_surrogate}, and then take its derivative,}
\begin{equation*}
   \nabla_{\vmeanparam} \sqr{ \myang{\vmu, \vlambda_t - \tvlambda_t } + \frac{1}{\rho_t} \dkls{}{q}{q_t} }  = 
   (\vlambda_t - \tvlambda_t) + \frac{1}{\rho_t}(\vlambda - \vlambda_t).
\end{equation*}
\revision{The above expression is obtained by noting that the derivative of the KL divergence is equal to the difference in the respective natural parameters.
   Setting this to 0, we get the first equality in \cref{eq:blr_surrogate}, which proves the equality between the two formulations.}}
\end{proof}
\revision{The mirror-descent formulation uses surrogates similar to the quadratic surrogates used in the Newton step (and also in gradient descent)} but the linearization is done with respect to $\vmu$, and the quadratic term is replaced by the KL divergence. To connect the BLR surrogate to the quadratic surrogate, we rewrite the optimization problem \revision{of \cref{eq:blr_surrogate}} in a form where the terms depending on $\barloss$ are isolated from the rest, and then specialize the surrogate to the case of a full-covariance Gaussian. We use \cref{eq:natgrad_elbo} to rewrite the BLR surrogate and then simplify in the second line by noting that $\myang{\vmu, \vlambda_t} = \myexpect_q[\log q_t(\vparam)]$,
\begin{equation}
   \begin{split}
      \vlambda_{t+1} &= \argmin_{\vmeanparam} \,\, \myang{\vmu, -\tvlambda_t + \vlambda_t } + \frac{1}{\rho_t} \dkls{}{q}{q_t} \\
      &= \argmin_{\vmeanparam} \,\, \myang{\vmu, -\tvlambda_t } + \myexpect_q[\log q_t(\vparam)] + \frac{1}{\rho_t} \dkls{}{q}{q_t} \\
      &= \argmin_{\vmeanparam} \,\, \myexpect_{q} \sqr{ \myang{\vT(\vparam), -\tvlambda_t } } + \frac{1}{\rho_t} \dkls{}{q}{q_t^{1-\rho_t}}. 
   \end{split}
\end{equation}
The last line is obtained by merging the last two terms in the second line into one.

Finally, we specialize the surrogate for Gaussians of the form $q_t = \gauss(\vm_t, \vS_t^{-1})$ by noting that the natural gradients can be expressed in terms of the gradient and Hessian. Specifically, we use the following result from \citet[Eq. 11]{khan2023bayesian}, 
\begin{equation}
   \tvlambda_t = - \myexpect_{q_t} \sqr{ \rnd{ \begin{array}{c} \nabla \barloss(\vparam) - \vH(\vparam) \vm_t \\ \half \vH(\vparam) \end{array} } }.
      \label{eq:natgrad_gauss}
\end{equation}
Noting that $\vT(\vparam) = (\vparam, \vparam\vparam^\top)$ for a full-covariance Gaussian, we get the following BLR surrogate which closely resembles the quadratic surrogate in \cref{eq:grad_desc},
\begin{equation}
    \begin{split}
       \myang{\vT(\vparam), -\tvlambda_t } &= \vparam^\top \myexpect_{q_t}[\nabla \barloss(\vparam) - \vH(\vparam) \vm_t] + \half \trace\sqr{ \vparam\vparam^\top \myexpect_{q_t}[\vH(\vparam)]} \\
       &= \vparam^\top \myexpect_{q_t}[\nabla \barloss(\vparam)] + \half (\vparam - \vm_t)^\top \myexpect_{q_t}[\vH(\vparam)] (\vparam-\vm_t) + \text{const.} 
    \end{split}
    \label{eq:blr_sgd}
\end{equation}
\revision{Compared to \cref{eq:grad_desc}, the} difference here is that the gradient and Hessian are now evaluated at and averaged over the samples from $q_t$, and the quadratic term is centered at $\vm_t$ instead of $\vparam_t$. The surrogate pays attention not only to the mean $\vm_t$ but also to its neighborhood. In this sense, this surrogate is more \emph{global} than the one used in the Newton step.

In fact, the quadratic surrogate in \cref{eq:grad_desc} can be obtained as a special case of the BLR surrogate by using the same approximations that are used by \citet{khan2023bayesian} to derive the Newton step. Essentially, we set $\rho_t = 1$ and then apply the delta method to approximate the expectations at the mean $\vm_t$ as shown below:
\begin{equation}
   \myexpect_{q_t}[\nabla \barloss(\vparam)] \approx \nabla \barloss(\vm_t)
   \quad\quad \text{ and } \quad\quad
   \myexpect_{q_t}[\vH(\vparam)] \approx \vH(\vm_t).
   \label{eq:delta}
\end{equation}
With $\rho_t = 1$, the KL term reduces to the entropy as shown in the second line below, then we substitute \cref{eq:blr_sgd}, apply the delta method, and simplify,  
\begin{align}
   \myexpect_{q} &\sqr{ \myang{\vT(\vparam), -\tvlambda_t } } + \frac{1}{\rho_t} \dkls{}{q}{q_t^{1-\rho_t}}  \label{eq:surrogate_conj}\\
   &= \myang{\myexpect_{q} [\vT(\vparam)], -\tvlambda_t }  - \entropy(q) \nonumber\\
       &= \myexpect_{q} \sqr{ \vparam^\top \myexpect_{q_t}[\nabla \barloss(\vparam)] + \half (\vparam - \vm_t)^\top \myexpect_{q_t}[\vH(\vparam)] (\vparam-\vm_t) } + \half \log|2\pi\vS| \nonumber\\ 
       &\approx \myexpect_{q} \sqr{ \vparam^\top \nabla \barloss(\vm_t) + \half (\vparam - \vm_t)^\top \vH(\vm_t) (\vparam-\vm_t) } + \half \log|2\pi\vS| \nonumber\\ 
   &= \vm^\top \nabla \barloss(\vm_t) + \half (\vm - \vm_t)^\top \vH(\vm_t) (\vm -\vm_t) + \half \trace\sqr{\vH(\vm_t) \vS^{-1} } + \half \log |2\pi\vS| \nonumber .
\end{align}
Optimizing the above approximate surrogate yields the quadratic surrogate in \cref{eq:grad_desc}. For example, collecting the terms that depend on $\vm$, we get the following:
\begin{equation}
   \vm_{t+1} = \argmin_{\text{\vm}} \,\, \vm^\top \nabla \barloss(\vm_t) + \half (\vm - \vm_t)^\top \vH(\vm_t) (\vm -\vm_t), 
\end{equation}
which is equivalent to \cref{eq:grad_desc}. \revision{Similarly,} optimizing with respect to $\vS$ is also straightforward and yields $\vS_{t+1} = \vH(\vm_t)$.  We see that, for a full-covariance Gaussian, the BLR surrogate recovers the quadratic surrogate obtained by Taylor's \revision{method. By using an arbitrary exponential-family form, the BLR surrogates give rise to novel surrogates, for example, those obtained by using Bernoulli or log-Normal forms.} 

\revision{For conjugate Bayesian models, the BLR surrogate reduces to the well known Fenchel conjugate. For instance, consider Bayes' rule given in \cref{eq:bayes_rule_add}.  By using the second line of \cref{eq:surrogate_conj}, we can equivalently rewrite \cref{eq:bayes_rule_add} as follows}
\begin{equation}
   \begin{split}
      \vlambda_{t+1} &= \argmin_{\vmeanparam} \,\, \myexpect_{q} \sqr{ \myang{\vT(\vparam), - ( \tvlambda_{\text{lik}} + \tvlambda_{\text{prior}} ) } } - \entropy(q) \\ 
      &= \argmin_{\vmeanparam} \,\, \myang{\vmu, - (\tvlambda_{\text{lik}} + \tvlambda_{\text{prior}} ) } + A^*(\vmu), 
   \end{split}
\end{equation}
where the second line follows by noting that the Shannon entropy $\entropy(q) = -A^*(\vmu)$ is also equal to the negative of the convex conjugate of the log-partition function. The result of \cref{eq:bayes_rule_add} is recovered by setting the derivative to 0,
\[
   - (\tvlambda_{\text{lik}} + \tvlambda_{\text{prior}} ) + \nabla A^*(\vmu) = 0
   \quad \implies \quad
   \vlambda_* = \tvlambda_{\text{lik}} + \tvlambda_{\text{prior}}.
\]
Fenchel conjugates are commonly used in machine learning \citep{blondel2020learning,paulus2020gradient} and their origins are attributed to convex optimization and information geometry, but \revision{Bayes is rarely mentioned.} The derivation \revision{highlights the close connections between these fields. Bayesian inference can be simply seen as the computation of} Fenchel conjugates whenever the joint distribution of the Bayesian model is log-linear with respect to $\vmu$.

We end this section with some historical remarks about the BLR surrogate. \revision{The forms in \cref{eq:blr_surrogate} and \cref{eq:bayes_rule_add} were first proposed by \citet{khan2017conjugate}. The connections to natural gradients were made explicit in a later work by \citet{khan2018fast1}, where the surrogates were referred to as the \emph{sites},} following the common nomenclature from the graphical model literature. 
The work of \citet{Opper:09} was the first to show the global nature of the VB solutions for Gaussian posteriors, which was later generalized to the exponential family by \citet{salimans2013fixed}, but neither used natural gradients to derive surrogates. 

\section{A Large-Scale Implementation of VB for LLMs}
\label{sec:scaling}

Scaling VB to large problems has always been an issue. Directly optimizing the VB objective using deep-learning optimizers does not give satisfactory results, which has \revision{led to the belief that the} VB objective is inferior to those used in deep-learning training. Recent work by \citet{ShenDCNMBYGCKM24} addresses this issue by using the BLR. Natural gradients are powerful here because the natural-gradient update naturally takes a form similar to the popular Adam optimizer.
We will now briefly describe the result.

\citet{khan2023bayesian} showed that the BLR update takes a strikingly similar form to RMSprop and Adam. The RMSprop update is shown below
\begin{align} 
   \vparam_{t+1} &\leftarrow \vparam_t - \alpha \frac{ \widehat{ \grad}_{\text{\vparam}} \barloss(\vparam_t)}{\sqrt{\vv_{t+1}} + c \vone} , 
   \quad\text{where} \,\, \vv_{t+1} \leftarrow (1-\beta) \vv_t + \beta \sqr{\widehat{ \grad}_{\text{\vparam}} \barloss(\vparam_t)}^2. 
   \label{eq:rmsprop_update}
\end{align}
\revision{Similarly to stochastic gradient descent, the update uses the minibatch gradient (denoted by \smash{$\widehat{\grad}$}) with learning rate $\alpha$. The gradients are scaled by the square-root of a scale vector $\vv_t$. The constant $c$ is added to avoid dividing by 0. The scale vector contains a moving average of the squared gradient (element-wise square) with learning rate $\beta$. This is a heuristic used to reduce the stochastic noise due to minibatches
\citep{hintonTieleman}.}

A surprising result shown by \citet{khan2018fast} is that the BLR for the diagonal-covariance Gaussian family takes a form similar to RMSprop and, unlike RMSprop, no heuristics are required to derive it. The resemblance is primarily due to the information geometry of VB. Essentially, if we use $q_t = \gauss(\vm_t, \diag(\vs_t)^{-1})$ with a vector $\vs_t$, then the BLR reduces to a Newton-like form where the gradient and Hessian are evaluated at samples from $q_t$. Due to the diagonal
covariance, only the diagonal of the Hessian is needed.
The BLR update for this case takes the following form, 
\begin{align}
  \vm_{t+1} \leftarrow \vm_t -
   \rho_t \frac{\myexpect_{q_t}[\widehat{\grad}_{\text{\vparam}}
   \barloss(\vparam)]}{\vs_{t+1}} ,\text{ where } 
  \vs_{t+1} \leftarrow (1-\rho_t) \vs_t +
   \rho_t\,\, \myexpect_{q_t}[\diag(\widehat{\vH}(\vparam) ) ] .
  \label{eq:von_minibatch}
\end{align}
\revision{The update closely resembles RMSprop, but has some differences. Most notably, it evaluates gradients at samples from $q_t$ and replaces the squared gradient by the Hessian. Moreover, the update of the mean is a Newton-like step where square-root is not used for the scale vector. This update is referred to as the Variational Online Newton (VON) algorithm.}
\begin{figure}[t]
    \center
       \includegraphics[width=4.5in]{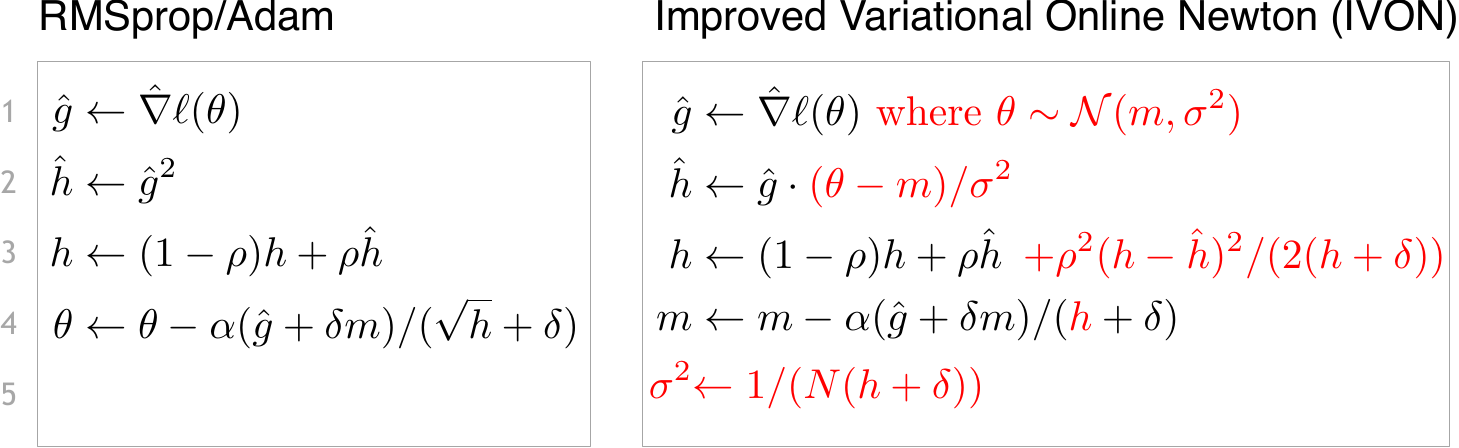}
    \caption{The figure shows a rough pseudo-code of RMSprop/Adam and compares it to the IVON algorithm which is the natural-gradient update of the BLR. We get a line by line correspondence between the two algorithms and the parts that are different are highlighted in red. The differences are essentially due to expectation in the VB objective which is approximated by one Monte-Carlo sample (in line 1). IVON computes the Hessian at this sample by using the
    reparameterization trick (line 2). The additional term in line 3 is due to a retraction obtained by using Riemannian gradient descent. The Hessian estimate is then used in a Newton-like update (line 4).}
    \label{fig:ivon}
\end{figure}

Recently, \cite{ShenDCNMBYGCKM24} used a Riemannian extension of VON, originally proposed by \citet{lin2020handling} and showed that it often performs comparably to Adam and sometimes can even beat it. They call the new algorithm the Improved VON algorithm (IVON as an abbreviation). Pseudo-code is shown in \cref{fig:ivon} where IVON is contrasted with the RMSprop/Adam pseudo-code. There is a line-by-line correspondence between the two methods, with some differences highlighted in red.

Because the IVON updates are almost identical to Adam, we can now efficiently compute natural gradients at large scale, for instance, for modern large language models (LLMs). The method overcomes a common criticism of natural gradients and Bayes, that they are too computationally intensive. The IVON algorithm defies such criticism. The posterior expectations are conveniently approximated \revision{with} one Monte-Carlo sample (line 1). \revision{The expectation of the Hessian is approximated by using} the reparameterization trick \citep{lin2019stein}, 
\[
   \myexpect_{q}[\vH(\vparam)] = 2 \nabla_{\text{\vS}^{-1}}\myexpect_{q}[\barloss(\vparam)] = \myexpect_{q}[\nabla \barloss(\vparam) \vS (\vparam - \vm)^\top],
\]
\revision{This example shows that not only can the natural gradients be computed efficiently at large scale, but also that natural gradient descent for VB can be implemented efficiently and in a form similar to deep-learning algorithms.} \cref{fig:teaser} shows the results for GPT-2 training (on OpenWebText) and ResNet-50 (on ImageNet), demonstrating that the natural-gradient update used in the BLR can in fact work well at large scale.

\begin{figure*}[t!]
   \centering
   \subfigure[GPT-2 on OpenWebText]{
       \includegraphics[width=.45\textwidth, keepaspectratio]{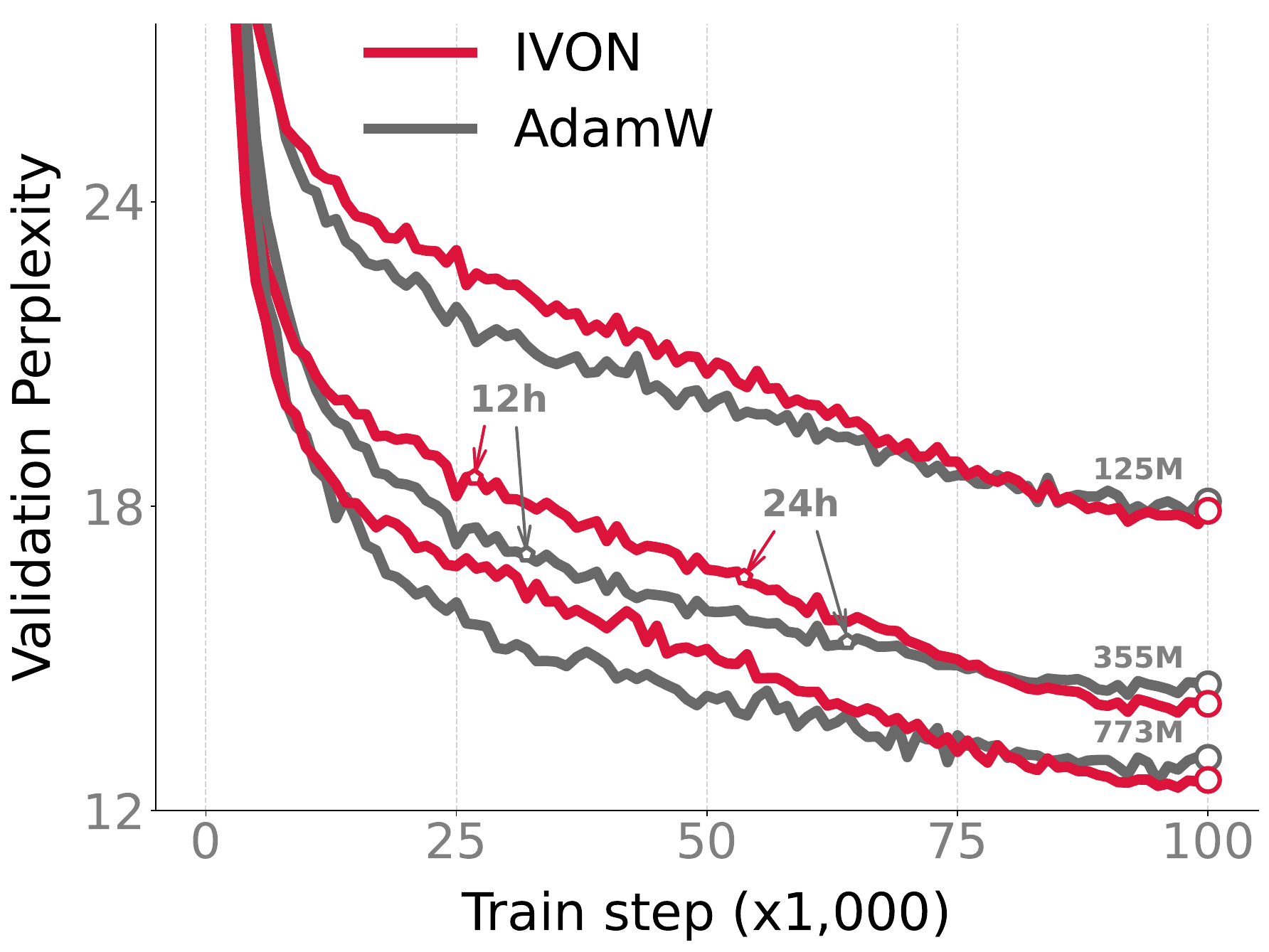}
     \label{fig:teaser_a}
   }
   \subfigure[ResNet-50 on ImageNet]{
     \includegraphics[width=.45\textwidth, keepaspectratio]{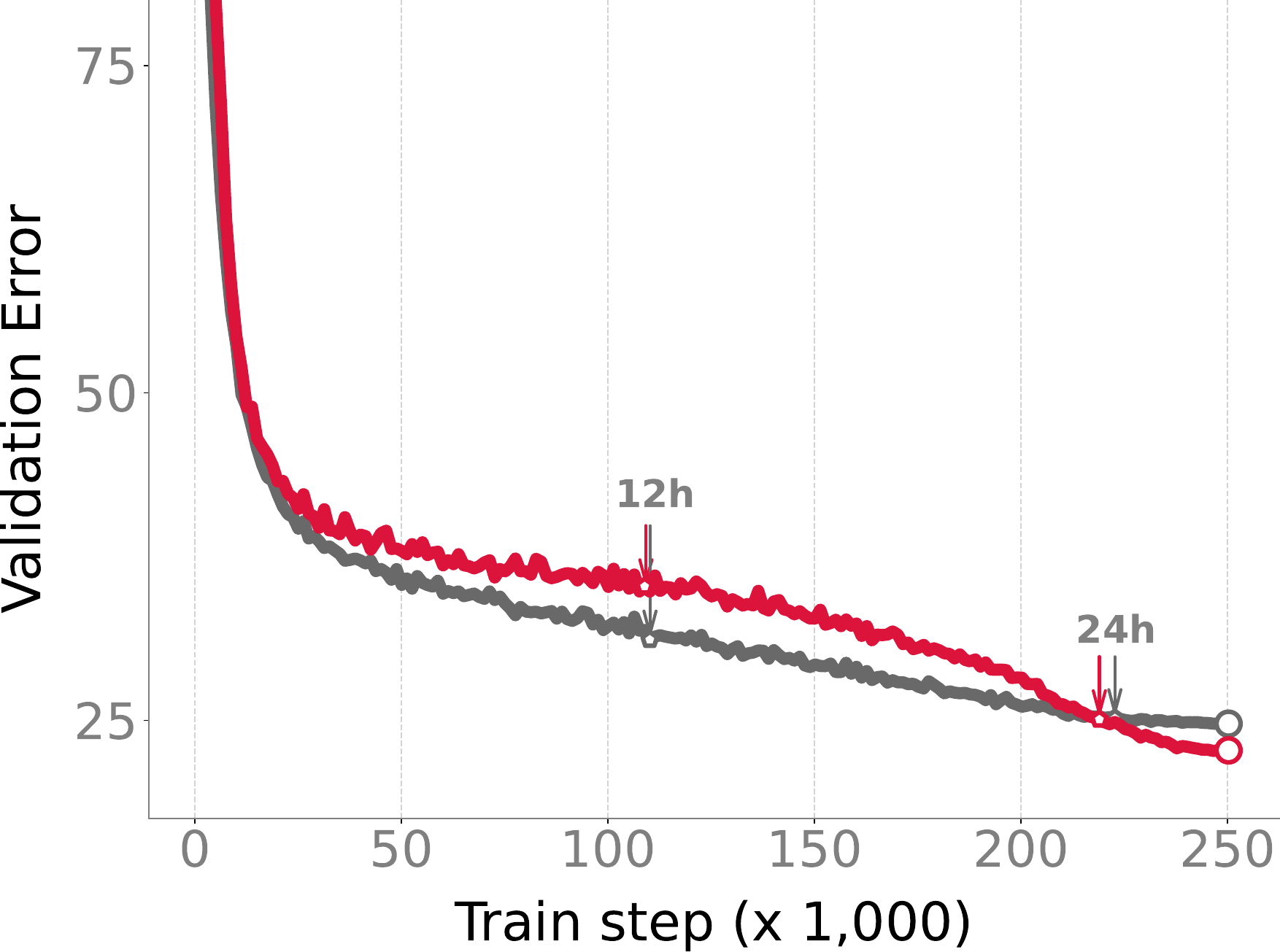}
     \label{fig:teaser_b}
   }
    \caption{Figures adopted from \citet{ShenDCNMBYGCKM24}. The first panel shows training of GPT-2 on OpenWebText and the second panel shows the same for ResNet-50 on ImageNet. Runtime in hours (h) is indicated by the arrows where we see that they are almost the same for IVON and AdamW. A small difference is due to the sampling in IVON. IVON obtains slightly better accuracy in the end.}
   \label{fig:teaser}
 \end{figure*}


\section{Conclusion}\label{sec:discussion}

In this paper, we highlight a fundamental connection between information geometry and VB and discuss its consequences for machine learning. For exponential-family posteriors, a VB solution always requires estimation or computation of natural gradients. We believe this result extends beyond exponential families, for example, \citet{linfast, lin2021tractable, kiral2023lie} show such results. We discuss several consequences of this fact, connecting
them to their common origin in the optimality condition of the VB objective. For instance, Bayes' rule, when seen as addition of natural gradients, achieves the optimality condition in one step. This is shown to be analogous to Newton's method. We further show that the natural-gradient update for VB generalizes the quadratic surrogates used in gradient-based optimization. Finally, we show how the natural-gradient updates resemble deep learning optimizers and how this fact can be exploited to
scale VB to large problems.

The information geometry of VB helps us to connect VB to other machine learning algorithms and makes it useful for improving aspects of deep learning. It helps us to overcome a common criticism that Bayesian and information geometric methods are computationally intensive and may not scale well to modern models. Thanks to the fundamental connection between information geometry and VB, elegant theory of Bayesian principles can now be applied to solve important practical problems.

\backmatter

%
%
%
%
\bmhead{Acknowledgments}

This work is supported by JST CREST Grant Number JP-MJCR2112. Many thanks to Sophia Sklaviadis (RIKEN, Tokyo, Japan, and Instituto de Telecomunica\c{c}\~{o}es and Instituto Superior T\'{e}cnico, Lisbon, Portugal) for proof reading and giving constructive feedback.



\bibliography{refs}

\end{document}